\newif\if@blind
\if@blind \sethlcolor{black}\else
\DeclarePairedDelimiter{\abs}\lvert\rvert
\newcommand\norm[1]{\left\lVert#1\right\rVert}
\DeclareMathOperator*{\argmax}{arg\,max}
\newtheorem{theorem}{Theorem}[]
\newtheorem{lemma}[]{Lemma}
\author{ 
Mahsa Ghasemi, \textsuperscript{\rm 1}\thanks{These authors contributed equally to the manuscript.} Evan Scope Crafts, \textsuperscript{\rm 2 \textasteriskcentered} Bo Zhao, \textsuperscript{\rm 2, 3}  Ufuk Topcu \textsuperscript{\rm 2, 4}\\
}
\title{Multiple Plans are Better than One: Diverse Stochastic Planning}
\begin{document}

\maketitle

\begin{abstract}
In planning problems, it is often challenging to fully model the desired specifications. In particular, in human-robot interaction, such difficulty may arise due to human's preferences that are either private or complex to model. Consequently, the resulting objective function can only partially capture the specifications and optimizing that may lead to poor performance with respect to the true specifications. Motivated by this challenge, we formulate a problem, called \textit{diverse stochastic planning}, that aims to generate a set of representative --- small and diverse --- behaviors that are near-optimal with respect to the known objective. In particular, the problem aims to compute a set of diverse and near-optimal policies for systems modeled by a Markov decision process. We cast the problem as a constrained nonlinear optimization for which we propose a solution relying on the Frank-Wolfe method. We then prove that the proposed solution converges to a stationary point and demonstrate its efficacy in several planning problems.
\end{abstract}

\section{Introduction}
Solution diversity has value in numerous planning applications, including collaborative systems, reinforcement learning, and preference-based planning. In human groups and, more generally, animal groups, the so-called notion of behavioral diversity leads to the group members' heterogeneous behavior. This heterogeneity ensures that the members learn complementary skills, thus improving the group's overall performance. An agent learning a task in an unknown environment may benefit from inducing diversity in its decisions to explore the environment more efficiently. In planning with unknown preferences, one can use diversity to construct a set of behaviors that are suitable for different preferences. 

Algorithms that use notions of diversity to address one or more of these applications are known as \textit{quality diversity (QD)} algorithms. A key component of QD algorithms is a way to summarize the important properties of different solutions. This description, known as a \textit{behavior characterization}, is used to define diversity-based metrics. Without proper behavior characterization, solutions with trivial differences can have high values of diversity as measured by the resulting metric.

Our work is motivated by planning in settings where, in addition to a known objective, there exist some unknown objectives. The unknown objectives may represent a human user or designer's preference, which is either private or complex to model. In these settings, we propose a QD-based approach to construct a ``representative" --- small and diverse~--- set of near-optimal policies with respect to the known objective and then present that to the human to select from according to their unknown objectives. This approach allows the human to have the ultimate control over the behavior, without requiring prior knowledge of the human's preferences.

Formally, we consider the multi-objective optimization problem of returning a set of feasible policies for an infinite horizon Markov decision process (MDP) that is both near-optimal and diverse. We define the optimality of a set of policies as the sum of each policy's expected average reward in the set. Diversity captures the representativeness of a set of policies. We characterize the behavior of policies using their state-action occupancy measures and quantify diversity by the sum of pairwise divergences between the state-action occupancy measures of the policies in the set.

A key element of our approach is the behavior characterization of policies using their state-action occupancy measures. This characterization is domain-independent and fully encapsulates the dynamics of a given policy. We use this characterization to define the diversity of a set of policies through the pairwise Jensen-Shannon divergences between the occupancy measures. We then define the objective as a linear combination of the sum of the policies' rewards and their diversity. By utilizing the dual of the average cost linear program, we recast our formulation as a constrained optimization problem. We then show that, due to the constraints' linearity, the problem can be solved efficiently using the Frank-Wolfe algorithm. We also prove that the algorithm is guaranteed to converge to a stationary point. Furthermore, in a series of simulations, we evaluate the proposed algorithm's performance and show its efficacy.

\section{Related Work}
Research on the development of QD algorithms has occurred within two different communities. In the field of optimization, perspectives on evolution as a process that finds distinct niches for different species have motivated the use of diversity. Simultaneously, there has been significant interest in the use of diversity to provide high-quality solutions for unknown objectives within the planning community. 

In the optimization community, recent interest in QD algorithms has been driven by the success of the Novelty Search algorithm \cite{lehman2008exploiting}. The original Novelty Search algorithm eschews the use of notions of solution quality entirely; its sole goal is to find a set of solutions that are diverse with respect to some distance measure. Surprisingly, this approach is able to find solutions with better performance on difficult tasks, such as maze navigation, than algorithms relying on an objective function. This result has led to considerable interest in the development of new QD algorithms to address tasks that were previously considered to be too difficult. For a review, see Pugh, Soros, and Stanley (\citeyear{pugh2016quality}). 

The type of behavior characterization used in these works varies and can be domain-dependent. For example, in navigation problems, diversity can be defined using Euclidean distances between points visited. Another approach, used by the popular MAP elites algorithm, is to assume that a domain-dependent behavior characterization is given \cite{mouret2015illuminating}. A promising area of research is the development of new approaches to behavior characterization \cite{gaier2020automating}. 

The success of the Novelty Search and MAP elites algorithms has inspired the use of diversity in reinforcement learning, with the hope that diversity can help avoid poor local minima. Different methods of behavior characterization for policies have been used, including methods based on sequences of actions \cite{jackson2019novelty}, state trajectories \cite{eysenbach2018diversity}, or diversity through determinants of actions in states \cite{parker2020effective}. Similarly to our work, Parker-Holder et al. consider an explicit tradeoff between the quality and diversity of the policies. However, our approach differs in that we leverage knowledge of the system dynamics to characterize policies in a way that includes information about both the states visited and the policy actions, and to develop a solution algorithm with guaranteed convergence to a local minimum.

Behavior characterization has also been a key focus of QD-based work in the planning community. For example, in an approach similar to MAP elites, Myers and Lee (\citeyear{myers1999generating}) and Myers (\citeyear{myers2006metatheoretic}) assume that there is a meta-description of the planning domain. They then define an approach that obtains solutions that are diverse with respect to the meta-description. Another approach to behavior characterization
is through the use of domain landmarks, which are disjunctive sets of propositions that plans must satisfy, such as a set of states that a trajectory must reach before the goal state \cite{hoffmann2001ff}. If the set of landmarks
can be computed, a greedy algorithm can be used to
iteratively select landmarks from the set and find a plan that satisfies the landmark (e.g., reaches a certain state) \cite{bryce2014landmark}. Behavior characterization based on the plan actions, as in the RL community, is also a common technique~\cite{coman2011generating, nguyen2012generating, katz2020reshaping}.

The way behavioral characterization and diversity metrics are incorporated into planning algorithms varies. In some cases, the problem is formulated as maximizing the diversity of the set of solutions \cite{coman2011generating}, or as finding a set of solutions that satisfy a diversity threshold \cite{nguyen2012generating, srivastava2007domain}. In other cases, like our work, there exists both an unknown objective and a known objective, and the problem is formulated in terms of a tradeoff between the diversity of the solution set and the optimality of each of the candidate solutions \cite{coman2011generating, katz2020reshaping, petit2015finding}. Our work is distinct from these approaches because we develop a new method for behavior characterization and consider a stochastic setting modeled as an MDP. In addition,  unlike many QD-based planning algorithms, our approach does not rely on greedy strategies. While greedy algorithms have near-optimality guarantees in some settings, such as when the problem is submodular \cite{bach_2013}, in general no such guarantee exists.

\section{Problem Formulation}
We now overview the required background related to Markov decision processes, occupancy measures, and divergence metrics. Then, we present the main problem as a nonlinear optimization problem over the space of occupancy measures.

\subsection{Preliminaries}

We consider systems whose behavior is modeled by a Markov decision process (MDP). An MDP is a tuple $M = (S, A, P, R)$, where $S$ is a finite set of states, $A$ is a finite set of actions, $P : S \times A \times S \to [0,1]$ is a probabilistic transition function such that for all $s \in S$ and  $a \in A$, $\sum_{s' \in S} P(s'|s,a) = 1$, and $R : S \times A \times S \to \mathbb{R}$ is a reward function. 

A stationary stochastic policy $\pi$ on an MDP is a mapping from the state space to a probability distribution over the actions, formally defined as $\pi : S \times A \to [0,1]$. 
Here we consider only stationary stochastic policies and denote the set of all such policies as $\Pi_{ss}$.

We focus on the class of problems defined over MDPs that aim to maximize the long-run average reward. The long-run average reward of a policy $\pi$ is 
\begin{equation}
    \mathbb{E}_{\tau \sim \pi} \left[\lim_{T \to \infty} \frac{1}{T} \sum_{t=1}^T r(s_t,a_t)\right],
\end{equation}
where the expectation is over all possible trajectory realizations from policy $\pi$, and $s_t$ and $a_t$ are time-indexed states and actions according to a trajectory $\tau$. We assume that the MDP satisfies the weak accessibility (WA) condition.

The occupancy measure of a policy, $\rho^\pi(.,.)$, is defined as the distribution induced by the execution of that policy over the state-action pairs, asymptotically, i.e., 
\begin{equation}
    \rho^\pi(s,a) = \lim_{T\to\infty} \frac{1}{T} \sum_{t=1}^{T} \text{Pr}(s_t=s,a_t=a|\pi).
\end{equation}
The long run behavior of a stationary stochastic policy can be represented using its corresponding occupancy measure. 
An optimal stationary stochastic policy is a policy that maximizes the long-run average reward. It has been shown that under the WA condition, an optimal policy $\pi^*$ can be obtained by solving the
Bellman equation, which can be reformulated as the dual form of a linear program (see Section 4.5 in Volume II of \cite{bertsekas1995dynamic}),
\begin{equation} \label{objfctn}
	\begin{aligned}
		&\max_{\rho} \langle \rho,r \rangle\\
		&\textit{subject to}\\
		&\quad \sum_{a \in A} \rho(s,a) = \sum_{s' \in S} \sum_{a' \in A} P(s|s',a') \rho(s',a')\\
		&\qquad\qquad \textit{for all} \quad s \in S ,\\
		&\quad \sum_{s \in S} \sum_{a \in A} \rho(s,a) = 1,\\
		&\quad \rho(s,a) \ge 0
		\qquad\qquad \textit{for all} \quad s \in S, a \in A
	\end{aligned}
\end{equation}
over occupancy measures $\rho$, where $\langle \rho,r \rangle$ denotes the inner product in the space of $S \times A$, i.e., 
\begin{equation*}
    \langle \rho,r \rangle = \sum_{s \in S} \sum_{a \in A} \rho(s,a) r(s,a).
\end{equation*} 
In particular, an optimal policy $\pi^*$ corresponding to the solution $\rho^*$ from solving the above linear program can be computed by defining
\begin{equation}
	\pi^*(s,a) = \frac{\rho^*(s,a)}{\sum_{a' \in A}\rho^*(s,a')} \;, 
\end{equation}
for all non-transient states. We note that the optimal policy corresponding to $\rho^*$ is not uniquely defined as the choice of action in transient states does not affect the long-run behavior. 

In our problem context, we seek to find a set of policies such that each policy in the set is near-optimal, and the set is representative of the diverse range of near-optimal behaviors. Specifically, we aim to find a small set of policies with cardinality $k \in \mathbb{N}$, and we use $\Pi_k \in \Pi_{ss}^k$ to denote a set of $k$ stationary stochastic policies. 

The state-action occupancy measures provide a natural and domain-independent way to characterize the behavior of policies  and ensure diversity. In particular, by using a pairwise metric of the distance between occupancy measures, we can define a diversity metric for a set of policies. Given that the occupancy measures are probability distributions, a natural choice is the Jensen-Shannon divergence \cite{briet2009properties}. The Jensen–Shannon divergence between two probability distributions $p$ and $q$ is expressed as
\begin{equation}
    \text{JSD}(p \| q) = \frac{1}{2} \text{KL}(p \| m) + \frac{1}{2} \text{KL}(q \| m),
\end{equation}
where $m = 1/2 (p+q)$ is the average distribution, and $\text{KL}(.,.)$ denotes the Kullback–Leibler divergence. Kullback–Leibler divergence for two probability distributions $p$ and $m$, over the same discrete probability space $X$, is defined as
\begin{equation}
    \text{KL}(p \| m) = - \sum_{x \in X} p(x)\log\left(\frac{m(x)}{p(x)}\right).
\end{equation}
We choose the Jensen-Shannon divergence over other probability distribution-based measures because it is symmetric and bounded between zero and one. 

\subsection{Problem Statement}

We aim to design an algorithm that can provide a representative set of polices over an MDP that are near-optimal with respect to a known reward function. In particular, given the stated definitions, the objective is to construct $k$ policies that cumulatively, have high reward and diversity. 
We define the cumulative reward of a set of policies $\Pi_k$ as the sum of their individual accumulated rewards, i.e.,
\begin{equation}
    R(\Pi_k) = \sum_{\pi \in \Pi_k} \mathbb{E}_{\tau \sim \pi} \left[\lim_{T \to \infty} \frac{1}{T} \sum_{t=1}^Tr(s_t,a_t)\right],
\end{equation}
and the their cumulative diversity as the sum of the pairwise Jensen–Shannon divergences between their occupancy measures, i.e.,
\begin{equation}
    D(\Pi_k) = \sum_{\stackrel{\pi_i,\pi_j \in \Pi_k}{i < j}} \text{JSD}(\rho^{\pi_i} \| \rho^{\pi_j}).
\end{equation}
Therefore, given an MDP $M$ and a parameter $k$, the goal is to find a set of policies $\Pi_k \in \Pi_{ss}^k$ with high cumulative reward $R(\Pi_k)$ and high diversity $D(\Pi_k)$.

\section{Proposed Solution}
Our problem statement defines a multi-objective optimization problem that aims to maximize a reward-based objective and a diversity-based objective. A standard method for tackling multi-objective problems is to linearly combine the objectives using judiciously chosen weights. To that end, we first note that the objectives should be independent of the cardinality of the solution set, i.e., the number of policies should not affect the quality of the solution. We address this point by normalizing the reward term by the number of policies, $k$, and the diversity term by the number of unique policy pairs, $k(k-1)/2$. Then, we can define the compound objective function as a linear combination of the normalized reward and diversity. 
The problem of finding $\Pi_k^*$ can thus be cast as finding a solution to the following optimization problem:
\begin{equation}
    \Pi_k^* = \argmax_{\Pi_k \in \Pi_{ss}^k} \frac{1}{k}R(\Pi_k) + \frac{2\lambda}{k(k-1)} D(\Pi_k),
\end{equation}
where $\lambda$ is the tradeoff parameter that controls the relative weightings of the reward and diversity. Using the dual of the linear program for finding an optimal policy, we reformulate the above problem as
\begin{equation}\label{eq:main}
	\begin{aligned}
	    & \max_{\rho_{1:k}} f(\rho_{1:k}) \\
		&\textit{subject to}\\
		&\quad \sum_{a \in A} \rho_i(s,a) = \sum_{s' \in S} \sum_{a' \in A} P(s|s',a') \rho_i(s',a')\\
		&\qquad\qquad \textit{for all} \quad i \in [k], s \in S ,\\
		&\quad \sum_{s \in S} \sum_{a \in A} \rho_i(s,a) = 1 \qquad\qquad \textit{for all} \quad i \in [k],\\
		&\quad \rho_i(s,a) \ge 0 \qquad\qquad \textit{for all} \quad i \in [k], s \in S, a \in A,
	\end{aligned}
\end{equation}
where 
\begin{equation*}
    f(\rho_{1:k}) = \frac{1}{k}\sum_{i \in [k]} \langle \rho_i,r \rangle + 
		\frac{2\lambda}{k(k-1)}\sum_{\stackrel{i,j \in [k]}{i < j}} \text{JSD}(\rho_i \| \rho_j),
\end{equation*}
$\rho_{1:k} = \{\rho_1,\rho_2,\ldots,\rho_k\}$ denotes the $k$ occupancy measures corresponding to the $k$ policies, and $[k] = \{1,2,\ldots,k\}$.

The reformulated version is a constrained optimization problem with linear constraints and a nonlinear (and nonconcave) objective function. In general, this problem does not have a unique global solution. For instance, any permutation of the $k$ policies in an optimal solution will result in another optimal solution. Nonetheless, one can seek solutions that can at least converge to local stationary points. 

\subsection{Projected Gradient Ascent}

The first optimization method that we consider is projected gradient ascent (PGA)~\cite{boyd2004convex}. PGA iteratively applies a gradient update followed by a projection step. Let $\mathcal{P}(\tilde{\rho},M)$ denote the projection operator projecting $\tilde{\rho}$ onto the space of feasible occupancy measures for MDP $M$, i.e., it returns the solution to the optimization problem
\begin{equation}\label{eq:proj}
	\begin{aligned}
		& \min_\rho \mathcal{D}(\rho,\tilde{\rho})\\
		&\textit{subject to}\\
		&\quad \sum_{a \in A} \rho(s,a) = \sum_{s' \in S} \sum_{a' \in A} P(s|s',a') \rho(s',a')\\
		&\qquad\qquad \textit{for all} \quad s \in S ,\\
		&\quad \sum_{s \in S} \sum_{a \in A} \rho(s,a) = 1, \\
		&\quad \rho(s,a) \ge 0 \qquad\qquad \textit{for all} \quad s \in S, a \in A.
	\end{aligned}
\end{equation}

We choose the projection metric to be $\ell_2$-norm, i.e., $\mathcal{D}(\rho,\tilde{\rho}) \coloneqq \norm{\rho - \tilde{\rho}}_2^2$.
The details of the PGA algorithm are outlined in Algorithm~\ref{alg:pga}. Let $\Delta_M$ represent the space of feasible occupancy measures defined by the constraints in~\eqref{eq:main}. We initialize the occupancy measures by first defining random policies for the given MDP, i.e., policies with random probability distributions over actions in each state. The algorithm terminates once the convergence criteria are met, e.g., the gradient mapping \cite{nesterov2013introductory} defined as 
\[h^t:= \frac{1}{\eta^t} (\rho_{1:k}^{t+1}-\rho_{1:k}^{t+1/2})\] 
hits a target threshold or the number of iterations exceeds a prespecified number.

\begin{algorithm}[t]
	\caption{Diverse Stochastic Planning with Projected Gradient Ascent}
	\label{alg:pga}
	\begin{algorithmic}[1]
		\STATE \textbf{Input:} An MDP $M = (S,A,P,R)$, number of policies $k$, tradeoff parameter $\lambda$, step size $\eta^t$, maximum number of iterations $T$, convergence tolerance $\epsilon$\vspace{1mm}
		\STATE \textbf{Output:} Occupancy measures $\rho^T_1, \rho^T_2, \ldots, \rho^T_k$\vspace{1mm}
		\STATE Initialize $\rho^0_1, \dots, \rho^0_k$ by randomly sampling from $\Delta_M$\vspace{1mm}
		\FOR {$t = 0,\dots, T$}\vspace{1mm}
		\FOR {$i = 1,\dots, k$}\vspace{1mm}
		\STATE Find $\rho_i^{t + 1/2} \leftarrow \rho_i^t + \eta^t \nabla_{\rho_i} f(\rho_{i:k}^{t})$\vspace{1mm}
		\STATE Compute $\rho_i^{t+1} \leftarrow \mathcal{P}\left({\rho_i^{j + 1/2}, M}\right)$\vspace{1mm}
		\ENDFOR\vspace{1mm}
		\STATE Find the gradient mapping $h^t:= \frac{1}{\eta^t} (\rho_{1:k}^{t+1}-\rho_{1:k}^{t+1/2})$\vspace{1mm}
		\IF {$h^t \leq \epsilon$}\vspace{1mm}
		\STATE Return $\rho_1^{t+1} , \rho_2^{t+1} , \ldots, \rho_k^{t+1} $\vspace{1mm}
		\ENDIF\vspace{1mm}
		\ENDFOR\vspace{1mm}
        \STATE Return $\rho^T_1, \rho^T_2, \ldots, \rho^T_k$\vspace{1mm}
	\end{algorithmic}
\end{algorithm}

It is worth noting that the projection step~\eqref{eq:proj} is much simpler than the original constrained nonlinear optimization in~\eqref{eq:main} because it is a convex optimization problem, and hence, amenable to efficient solutions.

\subsection{Frank-Wolfe Algorithm}

Even though PGA can decouple the projection step for each policy, it still has to solve a convex optimization problem for each policy at each iteration. To avoid this complexity, we propose the use of the Frank-Wolfe (FW) algorithm~\cite{frank1956algorithm}. Every iteration of FW aims to move toward a minimizer of the linear approximation of the original objective function at the current point. Due to this fact, it has gained more popularity for optimization problems with structured constraint set. In particular, the linearity of the constraints in~\eqref{eq:main} turns every iteration of FW into a linear optimization problem. We implement FW with adaptive step sizes~\cite{lacoste2016convergence} and backtracking line search, as presented in Algorithm~\ref{alg:fw}. 
At iteration $t$, the algorithm finds a feasible point $s^t$ within the set of feasible policies, $\Delta_M$, that minimizes the linear approximation of $f(\rho_{1:k})$ at the current point. Then, it moves in the direction of $s^t$ by a step size $\gamma^t$ that is computed using a line search. We efficiently implement the line search using backtracking. The algorithm terminates once the FW gap defined as
\[g^t \coloneqq \langle d^t,\nabla_{\rho_{1:k}} f(\rho_{1:k}^{t}) \rangle\]
falls below a given tolerance or the number of iterations reaches a prespecified number $T$.

\begin{algorithm}[t]
	\caption{Diverse Stochastic Planning with Frank-Wolfe Algorithm}
	\label{alg:fw}
	\begin{algorithmic}[1]
		\STATE \textbf{Input:} An MDP $M = (S,A,P,R)$, number of policies $k$, tradeoff parameter $\lambda$, maximum number of iterations $T$, convergence tolerance $\epsilon$\vspace{1mm}
		\STATE \textbf{Output:} Occupancy measures $\rho^T_1, \rho^T_2, \ldots, \rho^T_k$\vspace{1mm}
		\STATE Initialize $\rho^0_1, \dots, \rho^0_k$ by randomly sampling from $\Delta_M$\vspace{1mm}
		\FOR {$t = 0,\dots, T$}\vspace{1mm}
		\STATE Compute $s^t \coloneqq \argmax_{s \in \Delta} \langle s,\nabla_{\rho_{1:k}} f(\rho_{1:k}^{t}) \rangle$\vspace{1mm}
		\STATE Find FW update direction $d^t \coloneqq s^t - \rho_{1:k}^t$\vspace{1mm}
		\STATE Find FW gap $g^t \coloneqq \langle d^t,\nabla_{\rho_{1:k}} f(\rho_{1:k}^{t}) \rangle$\vspace{1mm}
		\IF {$g^t \leq \epsilon$}\vspace{1mm}
		\STATE Return $\rho_1^t, \rho_2^t, \ldots, \rho_k^t$\vspace{1mm}
		\ELSE\vspace{1mm}
		\STATE Compute $\gamma^t = \argmax_{\gamma \in [0,1]} f(\rho_{1:k}^t + \gamma d^t)$\vspace{1mm}
		\ENDIF\vspace{1mm}
		\STATE $\rho_{1:k}^{t+1} \leftarrow \rho_{1:k}^t + \gamma^t d^t$\vspace{1mm}
		\ENDFOR\vspace{1mm}
        \STATE Return $\rho_1^T, \rho_2^T, \ldots, \rho_k^T$\vspace{1mm}
	\end{algorithmic}
\end{algorithm}

\section{Theoretical Guarantees}
Next, we prove that by applying PGA and FW on a slightly revised problem one can establish non-asymptotic convergence rates to a stationary point. 

Let $\Delta_{M,\delta} = \Delta_M \cap \{\rho_{1:k} | \rho_i \geq \delta, \forall i \in [k]\}$ for some $\delta > 0$ represent a restricted space for occupancy measures. In the next lemma, we prove that the gradient of the objective function $f(\rho_{1:k})$ is Lipschitz continuous over the restricted space $\Delta_{M,\delta}$. 
\begin{lemma}\label{lem:lip}
Let $\delta \in (0,1)$. The gradient of the objective function $f(\rho_{1:k})$ defined in~\eqref{eq:main} is $L$-Lipschitz over $\Delta_{M,\delta}$. That is,  
\begin{equation*}
\begin{aligned}
        \|\nabla f(\rho_{1:k})- \nabla f(\rho_{1:k}')\|_2 &\leq  L\|\rho_{1:k}- \rho_{1:k}'\|_2,\\ \forall \text{ } \rho_{1:k}, \rho_{1:k}' \in \Delta_{M,\delta}, \quad & L := \lambda \frac{1+\delta}{4\delta^2}.
\end{aligned}
\end{equation*}
\end{lemma}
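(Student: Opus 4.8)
The plan is to reduce the Lipschitz claim to a uniform bound on the spectral norm of the Hessian of $f$ over $\Delta_{M,\delta}$. The set $\Delta_{M,\delta}$ is convex, being the intersection of the polytope $\Delta_M$ with the box $\{\rho_{1:k}:\rho_i\ge\delta\ \forall i\}$, so the segment joining any two feasible points stays in $\Delta_{M,\delta}$, and on this domain $f$ is twice continuously differentiable because the logarithms in the Jensen--Shannon terms have arguments bounded away from $0$. Writing $\nabla f(\rho_{1:k})-\nabla f(\rho_{1:k}')=\bigl(\int_0^1\nabla^2 f(\rho_{1:k}'+t(\rho_{1:k}-\rho_{1:k}'))\,dt\bigr)(\rho_{1:k}-\rho_{1:k}')$ along that segment, it suffices to prove $\|\nabla^2 f(\rho_{1:k})\|_2\le L$ for every $\rho_{1:k}\in\Delta_{M,\delta}$.

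First I would drop the linear part $\tfrac1k\sum_i\langle\rho_i,r\rangle$, whose Hessian is zero, leaving $\nabla^2 f=\tfrac{2\lambda}{k(k-1)}\sum_{i<j}\nabla^2\text{JSD}(\rho_i\|\rho_j)$. Differentiating the definition of $\text{JSD}$ gives $\partial\,\text{JSD}(\rho_i\|\rho_j)/\partial\rho_i(s,a)=\tfrac12\log\frac{2\rho_i(s,a)}{\rho_i(s,a)+\rho_j(s,a)}$, from which one sees that the Hessian of a single term decouples over state--action pairs: for each $(s,a)$ it has a $2\times2$ block in the $(\rho_i(s,a),\rho_j(s,a))$ coordinates and is zero elsewhere. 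Suppressing $(s,a)$, that block is $\left(\begin{smallmatrix}\frac{1}{2\rho_i}-\frac{1}{2(\rho_i+\rho_j)} & -\frac{1}{2(\rho_i+\rho_j)}\\-\frac{1}{2(\rho_i+\rho_j)} & \frac{1}{2\rho_j}-\frac{1}{2(\rho_i+\rho_j)}\end{smallmatrix}\right)$.

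The crux is bounding this block on $\Delta_{M,\delta}$. Since each occupancy measure is a probability distribution, $\rho_i(s,a),\rho_j(s,a)\in[\delta,1]$, hence $\rho_i+\rho_j\ge2\delta$; a diagonal entry equals $\frac{\rho_j}{2\rho_i(\rho_i+\rho_j)}\le\frac{1}{4\delta}\cdot\frac{\rho_j}{\rho_i}\le\frac{1}{4\delta^2}$, and the off-diagonal entry has magnitude at most $\frac{1}{4\delta}$. Bounding the spectral norm of a symmetric matrix by its largest absolute row sum gives $\|\nabla^2\text{JSD}(\rho_i\|\rho_j)\|_2\le\frac{1}{4\delta^2}+\frac{1}{4\delta}=\frac{1+\delta}{4\delta^2}$, the block-diagonal structure over $(s,a)$ meaning the norm of the whole term equals the largest block norm. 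Summing over the $k(k-1)/2$ pairs with the triangle inequality yields $\|\nabla^2 f\|_2\le\frac{2\lambda}{k(k-1)}\cdot\frac{k(k-1)}{2}\cdot\frac{1+\delta}{4\delta^2}=\lambda\frac{1+\delta}{4\delta^2}=L$, which together with the first paragraph finishes the proof.

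I expect the main obstacle to be the Hessian computation and, in particular, the entrywise bound for one Jensen--Shannon term: it requires differentiating a logarithm twice, keeping track of the sparsity pattern across state--action pairs, and --- the subtle point --- using \emph{both} the lower bound $\rho_i(s,a)\ge\delta$ from the restricted feasible set \emph{and} the upper bound $\rho_i(s,a)\le1$ coming from $\rho_i$ being a distribution; without the latter the stated constant does not appear. Restricting from $\Delta_M$ to $\Delta_{M,\delta}$ is precisely what keeps the $1/\rho$ terms in the Hessian bounded, so no analogous Lipschitz estimate holds on the full feasible set.
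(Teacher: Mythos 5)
Your proposal is correct and follows essentially the same route as the paper: compute the Hessian of a single Jensen--Shannon term, exploit its sparsity over state--action pairs, bound the spectral norm by the sum of the two nonzero entries per row using $\delta\le\rho\le1$ (the paper invokes the Gershgorin circle theorem where you use the symmetric row-sum bound, which is the same estimate), and absorb the $\binom{k}{2}$ pairs into the $\tfrac{2\lambda}{k(k-1)}$ normalization. Your explicit remarks on the convexity of $\Delta_{M,\delta}$ and the need for the upper bound $\rho_j\le1$ are points the paper leaves implicit, but the argument is the same.
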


\begin{proof}
    First, we note that the linear term of $f(\rho_{1:k})$ does not contribute to the Lipschitzness. Moreover, the diversity term has been normalized. Therefore, if we can show that $\text{JSD}(\rho_i \| \rho_j)$ has a $L'$-Lipschitz gradient, then we can conclude that $f(\rho_{1:k})$ has a $\lambda L'$-Lipschitz gradient. To show that $\text{JSD}(\rho_i \| \rho_j)$ has Lipschitz gradient, we start by computing an entry of its Hessian $\nabla^2 \text{JSD}(\rho_i \| \rho_j)$. Let $x = (s,a) \in X$ be an arbitrary state-action pair where $X= S \times A$. Then, we have 
    \begin{equation*}
    \begin{aligned}
        \text{JSD}(\rho_i \| \rho_j) &= \frac{1}{2}\sum_{x \in X} \rho_i(x) \log \frac{2\rho_i(x)}{\rho_i(x)+\rho_j(x)}\\
        &+ \frac{1}{2}\sum_{x \in X} \rho_j(x) \log \frac{2\rho_j(x)}{\rho_i(x)+\rho_j(x)}.
    \end{aligned}
    \end{equation*}
    Taking the derivative with respect to an arbitrary $x$, we obtain
    \begin{equation*}
        \frac{\partial \text{JSD}(\rho_i \| \rho_j)}{\partial \rho_i(x)} = 
        \frac{1}{2} \log \frac{2\rho_i(x)}{\rho_i(x) + \rho_j(x)}.
    \end{equation*}
    By some straightforward calculation, one can see that the Hessian is sparse. More specifically, it holds that
    \begin{align*}
        \frac{\partial^2 \text{JSD}(\rho_i \| \rho_j)}{\partial \rho_i(x)\partial \rho_i(x)} &= \frac{\rho_j(x)}{2\rho_i(x)(\rho_i(x)+\rho_j(x))},\\
        \frac{\partial^2 \text{JSD}(\rho_i \| \rho_j)}{\partial \rho_i(x')\partial \rho_i(x)} &= 0,\\
        \frac{\partial^2 \text{JSD}(\rho_i \| \rho_j)}{\partial \rho_j(x)\partial \rho_i(x)} &= -\frac{1}{2(\rho_i(x)+\rho_j(x))},\\
        \frac{\partial^2 \text{JSD}(\rho_i \| \rho_j)}{\partial \rho_j(x')\partial \rho_i(x)} &= 0,\\
    \end{align*}
    where $x \neq x'$.
    Therefore, given that only two entries of each column of the Hessian are nonzero, using the Gershgorin circle theorem \cite{horn2012matrix}, we can show that
    \begin{equation*}
    \begin{aligned}
        \|\nabla^2 \text{JSD}(\rho_i \| \rho_j)\|_2 
        &\leq \abs[\Big]{\frac{\rho_j(x)}{2\rho_i(x)(\rho_i(x)+\rho_j(x))}}\\
        &+ \abs[\Big]{-\frac{1}{2(\rho_i(x)+\rho_j(x))}}\\
        &\leq \frac{1}{4\delta^2} + \frac{1}{4\delta} = \frac{1+\delta}{4\delta^2}.
    \end{aligned}
    \end{equation*}
    Hence, $\nabla \text{JSD}(\rho_i \| \rho_j)$ is $(1+\delta)/(4\delta^2)$-Lipschitz and consequently, $\nabla f(\rho_{1:k})$ is $\lambda(1+\delta)/(4\delta^2)$-Lipschitz.
\end{proof}

The following two theorems establish that since the gradient is Lipschitz, both PGA and FW are guaranteed to converge to a stationary point.

\begin{theorem}[Theorem 6.5 \cite{lan2020first}]
Define the minimal gradient mapping of the PGA algorithm as $\|\tilde{h}\|_2 := \min_{0\leq t\leq T} \|h^t\|_2$ encountered by the iterates during the algorithm until the $T\textsuperscript{th}$ iteration. Suppose that the stepsizes $\{\eta^t\}$ in the PGA scheme are
chosen such that $0<\eta^t<2/L$, where $L$ is the Lipschitz constant of the gradient of $f(\rho_{1:k})$ on $\Delta_{M,\delta}$. Then it holds that
\begin{equation}
    \|\tilde{h}\|_2 \leq \sqrt{\frac{f_\delta^\ast-f(\rho_{1:k}^{0})}{\sum_{t=0}^T\eta^t(1-L\eta^t/2)}},
\end{equation}
where $f_\delta^\ast$ denotes the optimal solution of \eqref{eq:main} over the restricted domain.
In particular, If $\eta^t = 1/L$, then
\begin{equation}
    \|\tilde{h}\|_2 \leq \sqrt{\frac{2L(f_\delta^\ast-f(\rho_{1:k}^{0}))}{T+1}}.
\end{equation}
\end{theorem}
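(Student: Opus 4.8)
The plan is to reduce the statement to the standard non-asymptotic analysis of projected gradient for smooth nonconvex maximization --- this is exactly the setting of Theorem~6.5 of \cite{lan2020first} --- and to run that short argument with the explicit Lipschitz constant that Lemma~\ref{lem:lip} supplies. First I would record the two ingredients the analysis needs. (i) The feasible set of the revised problem, $\Delta_{M,\delta}$, is convex and compact: it is the intersection of the affine Bellman-flow equalities, the normalization hyperplane, and the box $\{\rho_i \ge \delta\}$, so for $\delta$ small enough it is nonempty, closed, and bounded, whence $f_\delta^\ast$ is finite. (ii) By Lemma~\ref{lem:lip}, $\nabla f$ is $L$-Lipschitz on $\Delta_{M,\delta}$ with $L = \lambda(1+\delta)/(4\delta^2)$. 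Since the revised PGA iteration projects onto $\Delta_{M,\delta}$, every $\rho_{1:k}^{t}$ and every projected point $\rho_{1:k}^{t+1}$ lies in $\Delta_{M,\delta}$, so the smoothness inequality is valid at all points that appear below.

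The core is a one-step progress bound, obtained by combining two inequalities. The descent lemma for $L$-smooth functions, applied to the ascent step, gives $f(\rho_{1:k}^{t+1}) \ge f(\rho_{1:k}^{t}) + \langle \nabla f(\rho_{1:k}^{t}),\, \rho_{1:k}^{t+1}-\rho_{1:k}^{t}\rangle - \tfrac{L}{2}\|\rho_{1:k}^{t+1}-\rho_{1:k}^{t}\|_2^2$; and the first-order optimality (obtuse-angle) characterization of the Euclidean projection that produces $\rho_{1:k}^{t+1}$, tested against the feasible point $\rho_{1:k}^{t}$, gives $\langle \nabla f(\rho_{1:k}^{t}),\, \rho_{1:k}^{t+1}-\rho_{1:k}^{t}\rangle \ge \tfrac{1}{\eta^t}\|\rho_{1:k}^{t+1}-\rho_{1:k}^{t}\|_2^2$. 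Substituting and writing the gradient-mapping norm as $\|h^t\|_2 = \tfrac{1}{\eta^t}\|\rho_{1:k}^{t+1}-\rho_{1:k}^{t}\|_2$ yields $f(\rho_{1:k}^{t+1}) - f(\rho_{1:k}^{t}) \ge \eta^t\bigl(1 - L\eta^t/2\bigr)\|h^t\|_2^2$, which is nonnegative precisely under the stated restriction $0<\eta^t<2/L$. Then I would sum over $t = 0,\dots,T$, telescope the left-hand side, and use $f(\rho_{1:k}^{T+1}) \le f_\delta^\ast$ to obtain $\sum_{t=0}^{T}\eta^t(1-L\eta^t/2)\|h^t\|_2^2 \le f_\delta^\ast - f(\rho_{1:k}^{0})$; lower-bounding each $\|h^t\|_2$ by the minimum $\|\tilde h\|_2$, dividing, and taking a square root gives the first bound. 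For the specialization, $\eta^t = 1/L$ makes every coefficient $\eta^t(1-L\eta^t/2)$ equal to $1/(2L)$, so the denominator equals $(T+1)/(2L)$ and the bound reduces to $\|\tilde h\|_2 \le \sqrt{2L(f_\delta^\ast - f(\rho_{1:k}^{0}))/(T+1)}$.

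The hard part will be the projection inequality in the second step --- turning the variational condition $\langle \rho_{1:k}^{t} + \eta^t\nabla f(\rho_{1:k}^{t}) - \rho_{1:k}^{t+1},\, y - \rho_{1:k}^{t+1}\rangle \le 0$ for all $y \in \Delta_{M,\delta}$ into the lower bound on $\langle \nabla f(\rho_{1:k}^{t}),\, \rho_{1:k}^{t+1}-\rho_{1:k}^{t}\rangle$ --- because that is the only place convexity of $\Delta_{M,\delta}$ is genuinely used, and because the maximization (as opposed to minimization) sign conventions and the precise definition of $h^t$ must be tracked consistently there; everything after it is telescoping and arithmetic. Once this is in place, the hypotheses of Theorem~6.5 of \cite{lan2020first} hold verbatim for $-f$ on $\Delta_{M,\delta}$, so one may alternatively just invoke that theorem directly, with Lemma~\ref{lem:lip} pinning down the value of $L$ that enters the rate.
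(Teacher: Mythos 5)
The paper does not prove this statement at all: the bracketed attribution ``Theorem 6.5 \cite{lan2020first}'' is the entirety of its justification, with Lemma~\ref{lem:lip} supplying the constant $L$ that makes the citation applicable. Your reconstruction is the standard and correct argument behind that cited result --- descent lemma for the ascent step, the obtuse-angle property of the Euclidean projection tested at $y=\rho_{1:k}^t$ to get $\langle \nabla f(\rho_{1:k}^t),\rho_{1:k}^{t+1}-\rho_{1:k}^t\rangle \ge \tfrac{1}{\eta^t}\|\rho_{1:k}^{t+1}-\rho_{1:k}^t\|_2^2$, telescoping, and the bound $f(\rho_{1:k}^{T+1})\le f_\delta^\ast$ --- and the arithmetic for the $\eta^t=1/L$ specialization checks out. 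What the self-contained proof buys over the paper's bare citation is that it makes visible exactly where convexity of $\Delta_{M,\delta}$ and the Lipschitz constant from Lemma~\ref{lem:lip} enter, and it verifies that the maximization sign conventions line up with Lan's minimization statement.

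One caveat you should make explicit: your step ``writing the gradient-mapping norm as $\|h^t\|_2=\tfrac{1}{\eta^t}\|\rho_{1:k}^{t+1}-\rho_{1:k}^{t}\|_2$'' uses the standard gradient mapping $\tfrac{1}{\eta^t}(\rho_{1:k}^{t+1}-\rho_{1:k}^{t})$, whereas the paper's text and Algorithm~\ref{alg:pga} define $h^t:=\tfrac{1}{\eta^t}(\rho_{1:k}^{t+1}-\rho_{1:k}^{t+1/2})$, i.e., the projection residual, which differs from your quantity by $\nabla f(\rho_{1:k}^t)$ and vanishes identically whenever the gradient step stays feasible (so it cannot certify stationarity). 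The bound you derive --- and the one in Lan's Theorem 6.5 --- holds for the standard mapping, so your proof is the right one for the inequality as displayed; the discrepancy is best read as a slip in the paper's definition of $h^t$ rather than a gap in your argument, but a complete write-up should flag and resolve it.
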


\begin{theorem}[\cite{lacoste2016convergence}]
Define the minimal FW gap as $\|\tilde{g}\|_2 := \min_{0\leq t\leq T} \|g^t\|_2$ encountered by the iterates during the algorithm until the $T\textsuperscript{th}$ iteration. Consider running the FW algorithm with the adaptive stepsize strategy specified in Line 11 of Algorithm~\ref{alg:fw}. Then, it holds that
\begin{equation}
     \|\tilde{g}\|_2 \leq \frac{\max\{2(f_\delta^\ast-f(\rho_{1:k}^{0})),\mathrm{Dim}(\Delta_{M,\delta})^2L\}}{\sqrt{T+1}},
\end{equation}
where $L$ is the Lipschitz constant of the gradient of $f(\rho_{1:k})$, and $f_\delta^\ast$ denotes the optimal solution of \eqref{eq:main} over the restricted domain.
\end{theorem}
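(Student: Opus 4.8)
The plan is to obtain the stated bound as a direct instantiation of the nonconvex Frank--Wolfe convergence result of Lacoste-Julien~(\citeyear{lacoste2016convergence}), whose hypotheses are (i) a compact convex feasible set, (ii) an objective with Lipschitz-continuous gradient over that set, and (iii) a stepsize rule that achieves at least the per-iteration progress of the theoretical ``adaptive'' schedule. So the proof reduces to verifying these three ingredients for our problem and then bookkeeping the constants.

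First I would check that $\Delta_{M,\delta}$ is compact and convex: it is the intersection of the affine flow-balance and normalization constraints in~\eqref{eq:main} with the closed halfspaces $\{\rho_i(s,a)\ge\delta\}$, hence a polytope (nonempty for sufficiently small $\delta$, which is exactly why the revised, restricted problem is used). Because the linear-maximization oracle in Line~5 of Algorithm~\ref{alg:fw} returns a point of this polytope and every iterate $\rho_{1:k}^{t+1}=\rho_{1:k}^{t}+\gamma^{t}d^{t}$ with $\gamma^{t}\in[0,1]$ is a convex combination of $\rho_{1:k}^{t}$ and $s^{t}$, a trivial induction shows every iterate stays in $\Delta_{M,\delta}$. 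This matters because it guarantees that $f$ is finite-valued and differentiable along the whole trajectory --- the Jensen--Shannon terms never reach the boundary of the simplex where their gradient blows up --- so the estimate of Lemma~\ref{lem:lip} is valid at each visited point. Continuity of $f$ (the JSD is continuous and bounded by $\log 2$ on the full simplex) together with compactness also gives that $f_\delta^\ast:=\max_{\Delta_{M,\delta}}f$ is attained, so $f_\delta^\ast-f(\rho_{1:k}^{0})$ is a finite nonnegative quantity.

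Next I would invoke Lemma~\ref{lem:lip}, which gives that $\nabla f$ is $L$-Lipschitz on $\Delta_{M,\delta}$ with $L=\lambda(1+\delta)/(4\delta^{2})$. Standard smoothness reasoning then bounds the curvature constant $C_f$ of $f$ relative to $\Delta_{M,\delta}$ by $C_f\le L\,\mathrm{Dim}(\Delta_{M,\delta})^{2}$, where $\mathrm{Dim}$ denotes the diameter of the set. The line-search rule $\gamma^{t}=\argmax_{\gamma\in[0,1]}f(\rho_{1:k}^{t}+\gamma d^{t})$ in Line~11 yields $f(\rho_{1:k}^{t+1})\ge f(\rho_{1:k}^{t}+\gamma d^{t})$ for every $\gamma\in[0,1]$, so it dominates the progress produced by the particular stepsize used in Lacoste-Julien's analysis; hence the per-step ascent inequality obtained from the quadratic upper model evaluated along the FW direction holds with $g^{t}:=\langle d^{t},\nabla f(\rho_{1:k}^{t})\rangle\ge 0$. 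Summing this inequality over $t=0,\dots,T$ telescopes the left side into $f(\rho_{1:k}^{T+1})-f(\rho_{1:k}^{0})\le f_\delta^\ast-f(\rho_{1:k}^{0})$, and the standard two-case bookkeeping argument of Lacoste-Julien (according to whether the uncapped stepsize exceeds one) converts this into $\min_{0\le t\le T}g^{t}\le \max\{2(f_\delta^\ast-f(\rho_{1:k}^{0})),\,\mathrm{Dim}(\Delta_{M,\delta})^{2}L\}/\sqrt{T+1}$, which is the claimed estimate (for the scalar gap $g^t$, the notation $\|\cdot\|_2$ is just absolute value, and $C_f$ has been replaced by its surrogate $L\,\mathrm{Dim}(\Delta_{M,\delta})^{2}$).

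I expect the only real obstacles here to be careful bookkeeping rather than anything conceptual: confirming the trajectory never leaves $\Delta_{M,\delta}$, so that the Lipschitz constant of Lemma~\ref{lem:lip} is legitimately usable at every iterate, and confirming that the backtracking line search meets the progress requirement of the cited theorem (it does, since exact line search is optimal among all stepsizes in $[0,1]$, and backtracking can be set to guarantee the same sufficient decrease). Everything else is a transcription of the argument in Lacoste-Julien~(\citeyear{lacoste2016convergence}) with the explicit constants of our formulation substituted.
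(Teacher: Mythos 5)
Your proposal is correct and matches the paper's approach: the paper offers no independent proof of this theorem, presenting it as a direct instantiation of the nonconvex Frank--Wolfe guarantee of Lacoste-Julien (\citeyear{lacoste2016convergence}) on the restricted domain $\Delta_{M,\delta}$, with Lemma~\ref{lem:lip} supplying the Lipschitz constant $L$ exactly as you describe. Your added bookkeeping --- that the iterates remain in the polytope $\Delta_{M,\delta}$ by convexity, that the curvature constant is bounded by $L\,\mathrm{Dim}(\Delta_{M,\delta})^2$, and that the line search dominates the adaptive stepsize --- is precisely the verification of hypotheses the paper leaves implicit.
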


\section{Experiments and Results}
\begin{figure}
    \centering
    \includegraphics[width = .23\textwidth]{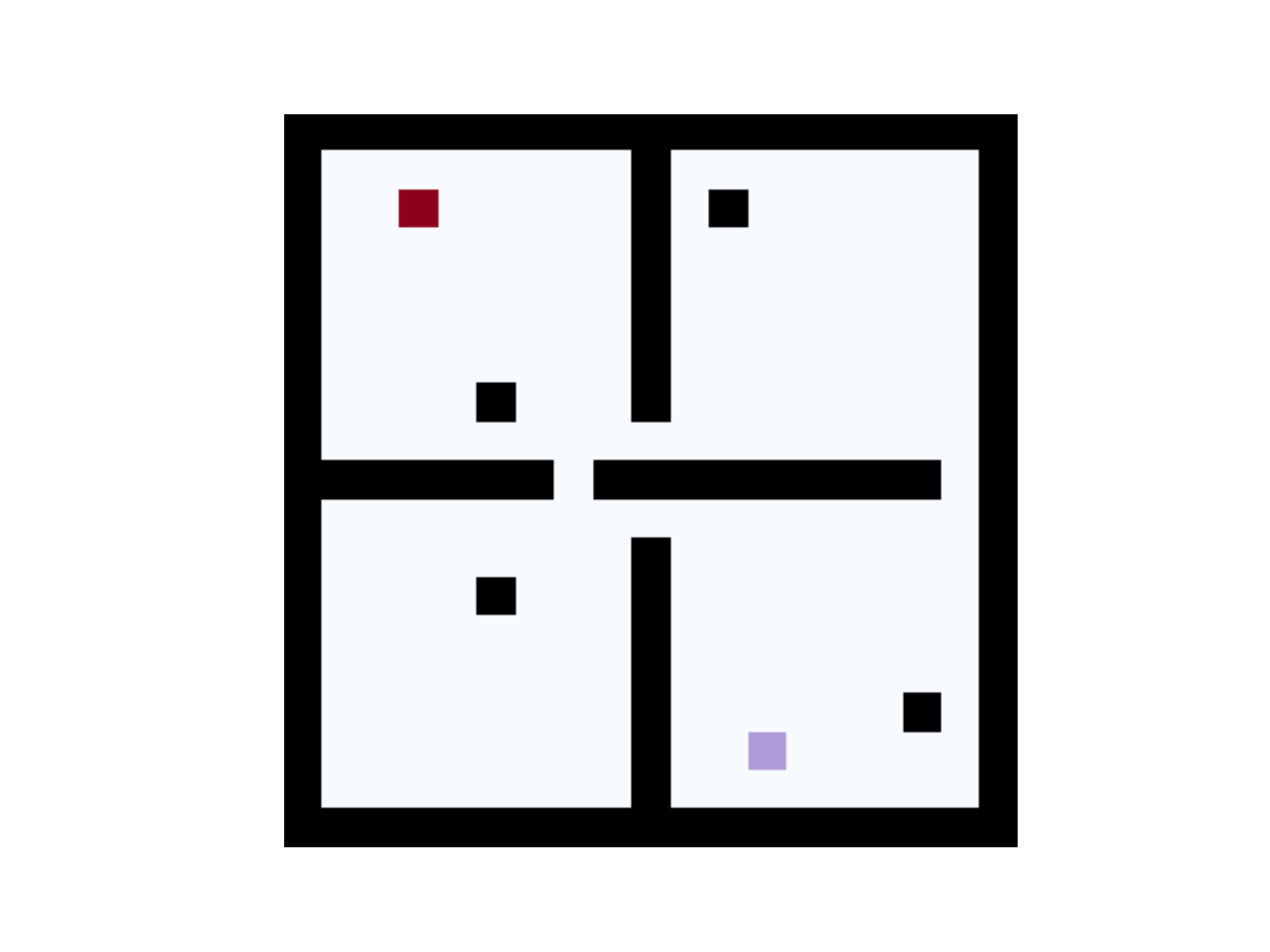} 
    \includegraphics[width = .23\textwidth]{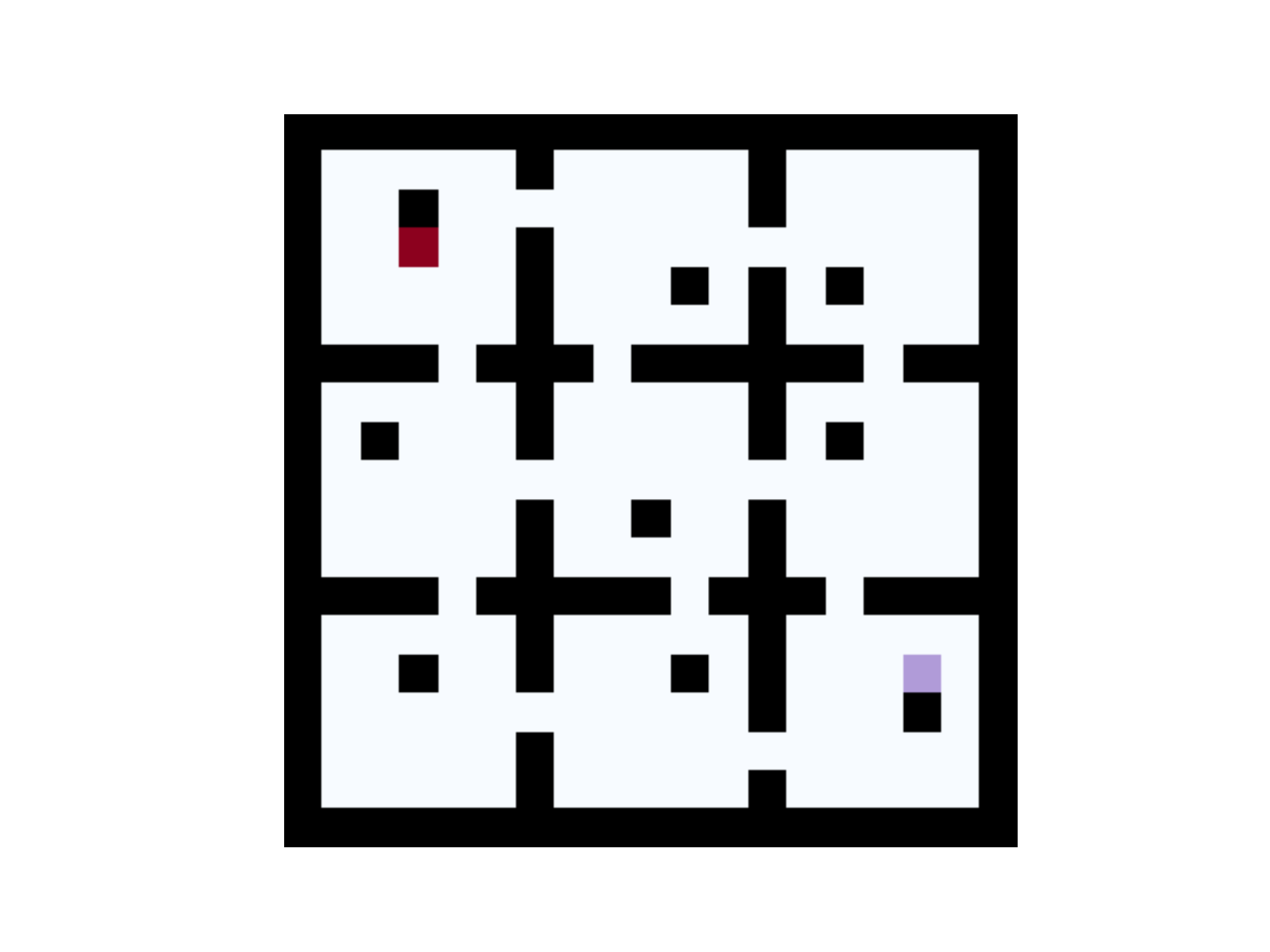} \\
    \hspace{1.8cm} (a) \hfill (b) \hspace{1.8cm}
    \caption{Example realizations of the (a) four-room and (b) nine-room grid worlds. Obstacles and walls are black, the initial state is red, and the goal state is purple.}
    \label{WorldDesign}
\end{figure}

We evaluate the performance of our proposed approach using grid worlds. 

\subsection{Grid World Design}

The grid worlds are two-dimensional nineteen-by-nineteen rectangular grids. The state of the agent is its current position. The agent receives a large reward for reaching a defined goal state, after which it transitions back to an initial state. In all states other than the goal, the agent has five choices of actions that correspond to moving down, left, up, right, or stopping and staying in the same place. The `stop' action is deterministic; for all other actions, the probability of transitioning to the desired successor state is given by a `correct transition' hyper-parameter, denoted $\alpha$. If the transition is not successful, the agent will transition to another neighboring state randomly. The environment is filled with obstacles and reaching an obstacle state results in a large penalty. We note that this design could represent, for example, the environment of a robot vacuum or guard robot. 

We define two different grid world types. In the first, which we refer to as the four-room grid world, the environment consists of four eight-by-eight rooms arranged in a two-by-two grid. There is a -200 reward for reaching states with walls and obstacles, a reward of 400 for reaching the goal, and rewards of -4 in all other states to shape the agent behavior. In the second grid world, there are nine total five-by-five rooms arranged in a three-by-three grid. We refer to this setup as the nine-room grid world. In this setup, there is a -40 reward for reaching walls and obstacles, a reward of 200 for reaching the goal, and a reward of -1.2 in all other states. Both environments have walls separating adjacent rooms with a single door, represented by a hole in the wall, linking the rooms. There is a single additional obstacle placed within each room. In both worlds, the initial state is located in the top left room, and the goal state is located in the bottom right room.

To evaluate the robustness of our proposed approach, we test the performance over a range of trials. In each trial, the location of the agent and the goal state within their respective rooms, the locations of the doors in each wall, and the locations of the obstacles within the rooms are randomized. Figure \ref{WorldDesign} shows a single trial grid world for both the four- and nine-room setups. 

\subsection{Frank-Wolfe and Projected Gradient Ascent}

\begin{figure}
    \centering
    \begin{tabular}{c| c  c c}
    Opt. Alg. & Reward/policy & Diversity & Runtime (s) \\
    \hline
    PGA &  -39.02 & 0.34 & 1488.76 \\
    FW & 13.24 & 0.50 & 26.90 \\
    \end{tabular}
    \caption{Performance of PGA and FW on the four-room grid world with $\lambda = 8$, $k = 2$, and $\alpha = .95$. Results are averaged over ten trials.}
    \label{FrankWolfevsPGA}
\end{figure}

Here we compare the performance of the FW and PGA optimization algorithms. We terminate PGA when the maximum iteration number is reached or when the difference in norms between consecutive solutions falls below a tolerance threshold of .01. We use Sequential Least Squares Programming \cite{nocedal2006numerical} to solve the projection step for each iteration. The Sequential Least Squares Programming algorithm terminates after ten iterations or when a stationarity condition is met. We implement the FW algorithm with a shrinkage factor of $\gamma = .5$ for the backtracking line search. The FW algorithm terminates when the Frank-Wolfe gap falls below a tolerance of .001 or when the maximum iteration number is reached. We set the maximum iteration number for both approaches as $T = 30$. 

We evaluate performance using the four-room grid-world with the correct transition parameter $\alpha = .95$, the number of policies in the return set $k = 2$, and the tradeoff parameter $\lambda = 8$. Figure \ref{FrankWolfevsPGA} shows the average performance over ten trials. FW is clearly superior to PGA in both performance and computational efficiency. This is because PGA involves solving a constrained least-squares optimization problem for each policy at each iteration to project the policies back onto the feasible space. Even small errors in the projection can considerably deteriorate the near-optimality and diversity of the policies. In contrast, FW only requires a linear program to be solved at each iteration. The solution to the linear program lies in the feasible space by construction and thus there are no issues with stability. We use FW as the optimization algorithm in the subsequent experiments.

\subsection{Role of the Tradeoff Parameter}

\begin{figure*}
\centering
    \includegraphics[width = .45\textwidth]{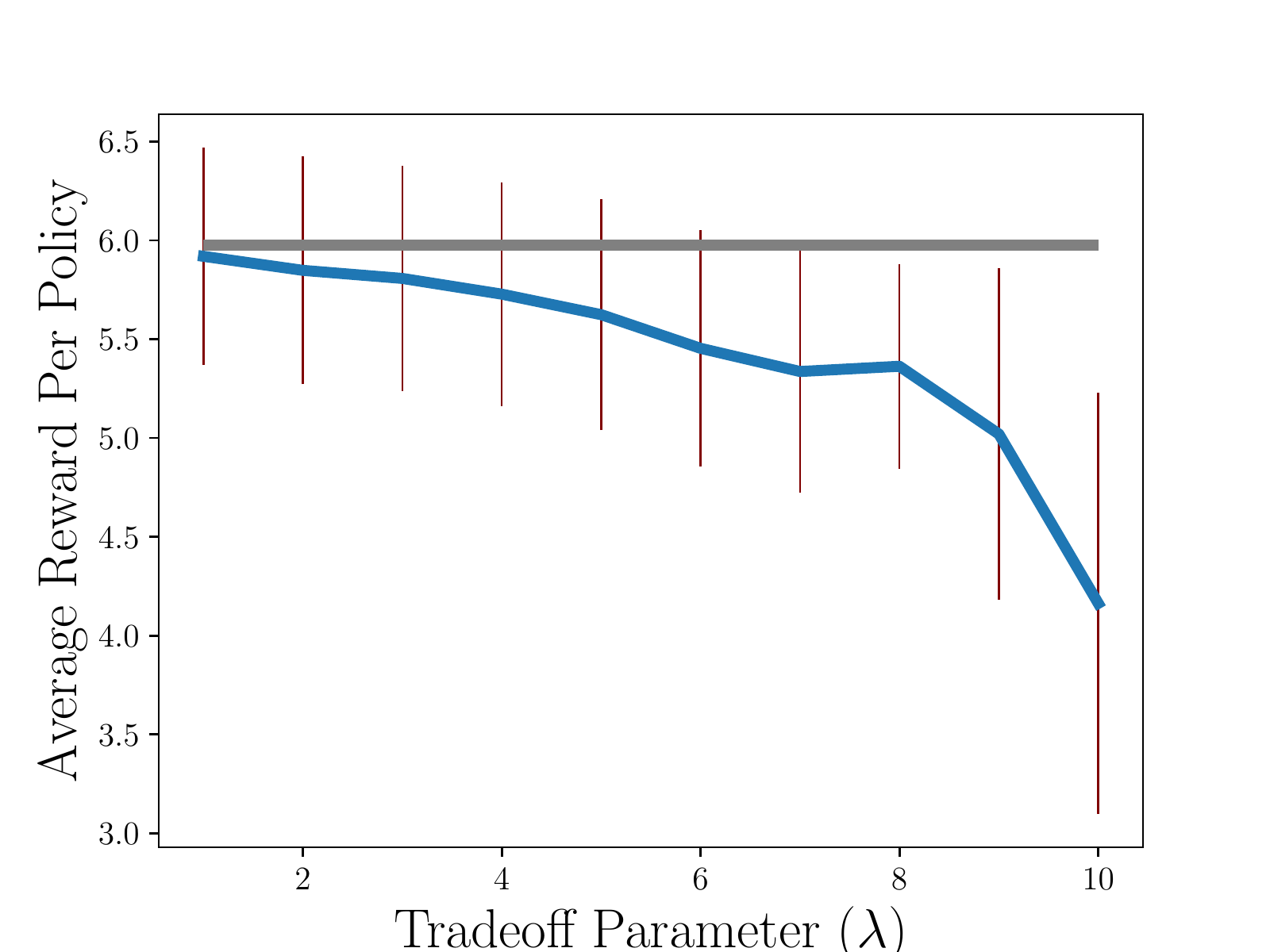}
    \includegraphics[width = .45\textwidth]{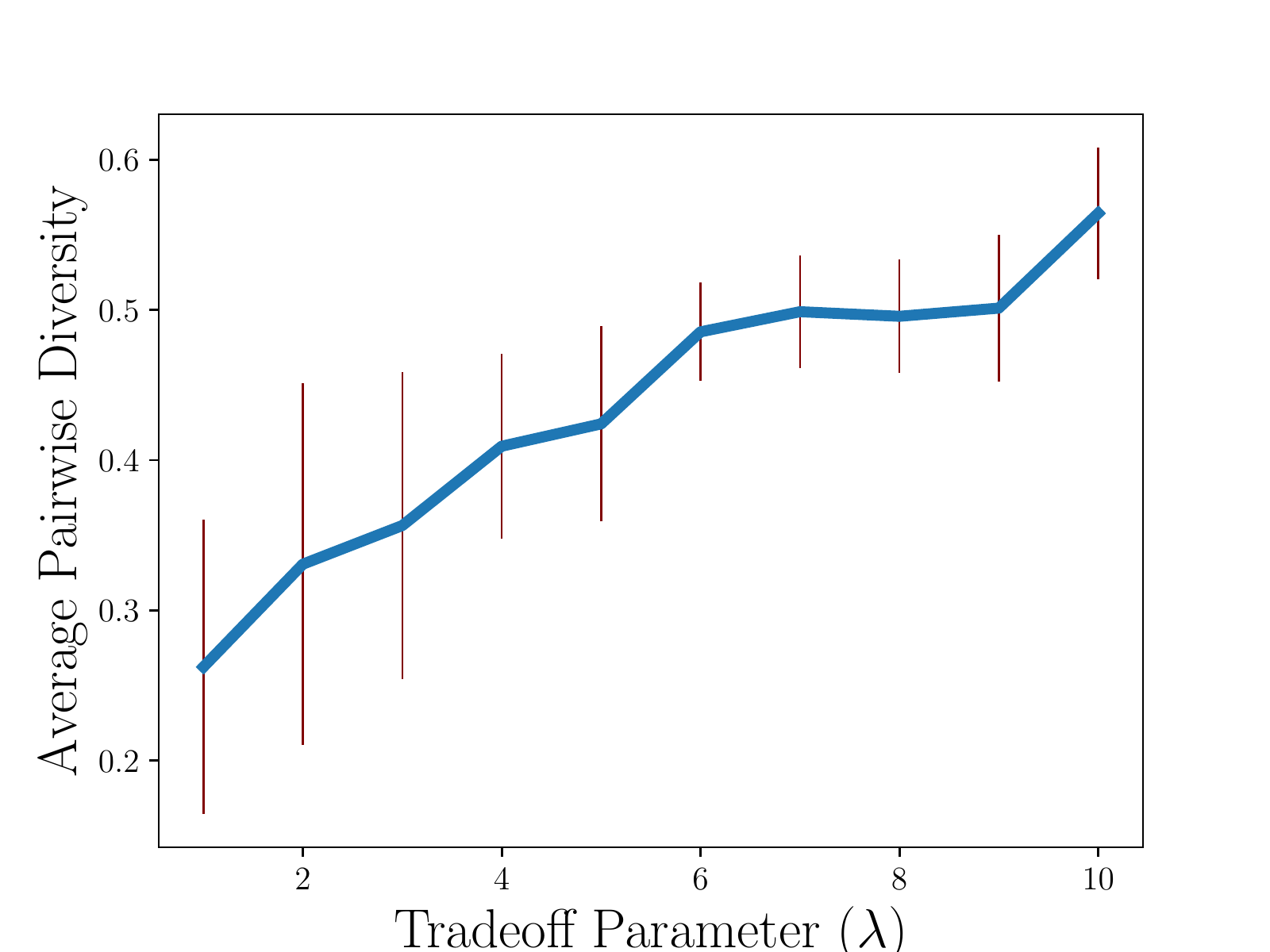} \\
    (a) \hspace{7.5cm} (b) \\
    \caption{The reward and diversity of the policies found as a function of the tradeoff parameter in the nine-room grid-world. Mean and standard deviation displayed are computed for ten trials. 
    (a) Average reward per policy. Reward for optimal policy is displayed in grey.
    (b) Average pairwise diversity.
    }
    \label{TradeoffPlots}
\end{figure*}

\begin{figure*}[!htb]
\centering
    \includegraphics[trim={2cm 5cm 1.5cm 5cm},clip, width =
    .95\textwidth]{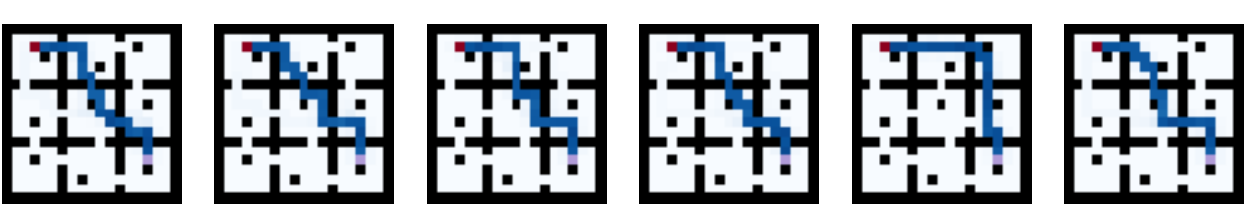}  \\
    (a) \\
    \includegraphics[trim={2cm 5cm 1.5cm 5cm},clip, width = .95\textwidth]{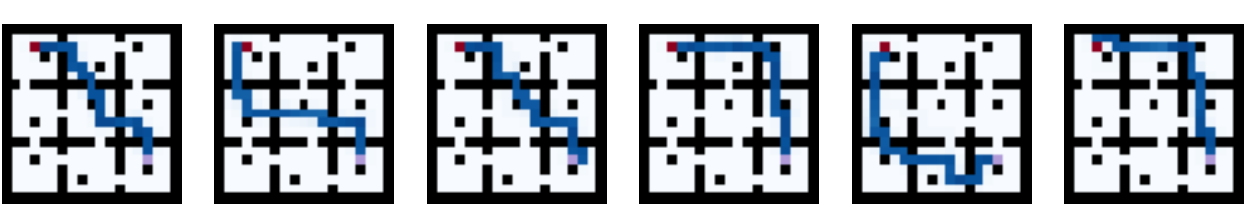}   \\
    (b)
    \caption{State occupancy maps from the policies obtained using the proposed approach for a single nine-room trial and two different tradeoff parameter values. Obstacles and walls are black, the initial state is red, the goal state is purple, and the state occupancy probability is blue. (a) $\lambda = 2$. (b) $\lambda = 8$. }
        \label{TradeoffOcuppancy}
\end{figure*}

The tradeoff parameter plays a crucial role in ensuring a proper balance between the near-optimality of the candidate solutions and the diversity of the set of solutions. Testing the performance as a function of the tradeoff parameter provides important insights into the performance and properties of our proposed approach. Here we evaluate the performance for a range of tradeoff parameters using the nine-room grid world. We fix the correct transition parameter $\alpha = .95$, and set the number of policies in the return set as $k = 6$. This $k$ value is the number of unique door combinations the agent can take to reach the goal without cycling or other undesirable behavior. 

Figure \ref{TradeoffPlots} shows the average reward per policy and the average pairwise diversity over ten trials. As expected, the pairwise diversity shows a marked increase as a function of the tradeoff parameter. The average reward decreases slightly as the tradeoff parameter increases until it begins to fall sharply around $\lambda = 8$. This is the point where it becomes optimal in some trials to find policies that do not reach the goal but have maximal diversity. Up to this point, our approach is still able to find increasingly diverse near-optimal solutions. 

This behavior can also be observed in Figure \ref{TradeoffOcuppancy}, which shows sample state occupancy maps for a small range of lambda values and a single trial. The state occupancy measure is the long-run expected probability of being in a given state $s$, i.e., $\rho(s) = \sum_{a \in A} \rho(s, a)$. With $\lambda = 2$, our approach finds several policies with nearly identical behavior. With $\lambda = 8$, the algorithm finds policies that utilize increasingly diverse strategies to reach the goal and traverse through many of the doors and rooms in the grid world. Note, however, that even with $\lambda = 8$, policies one and three and policies four and six have relatively similar behavior as they utilize the same door combinations to reach the goal. This can be explained by the fact that our approach finds only a local minimum in the loss landscape, and by the fact that even with high values of $\lambda$ the configuration of the doors and obstacles can limit the number of meaningfully distinct near-optimal policies.

\begin{figure*}[!htb]
    \centering
    \begin{multicols}{2}
    \includegraphics[trim = {0, 0, 0, 1cm}, clip, width = .45\textwidth]{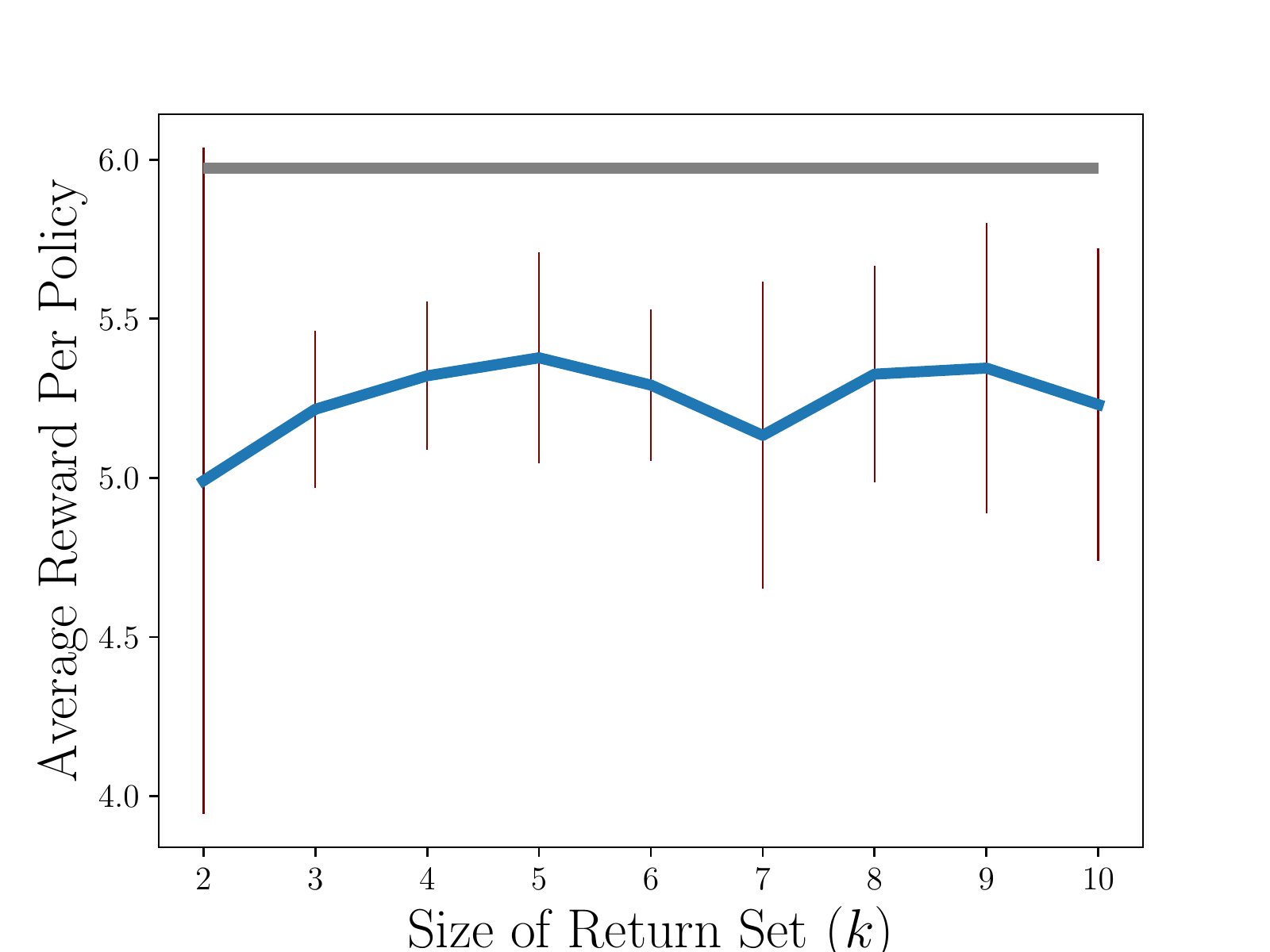} \\
        (a) \\
    \includegraphics[trim = {0, 0, 0, 1cm}, clip, width = .45\textwidth]{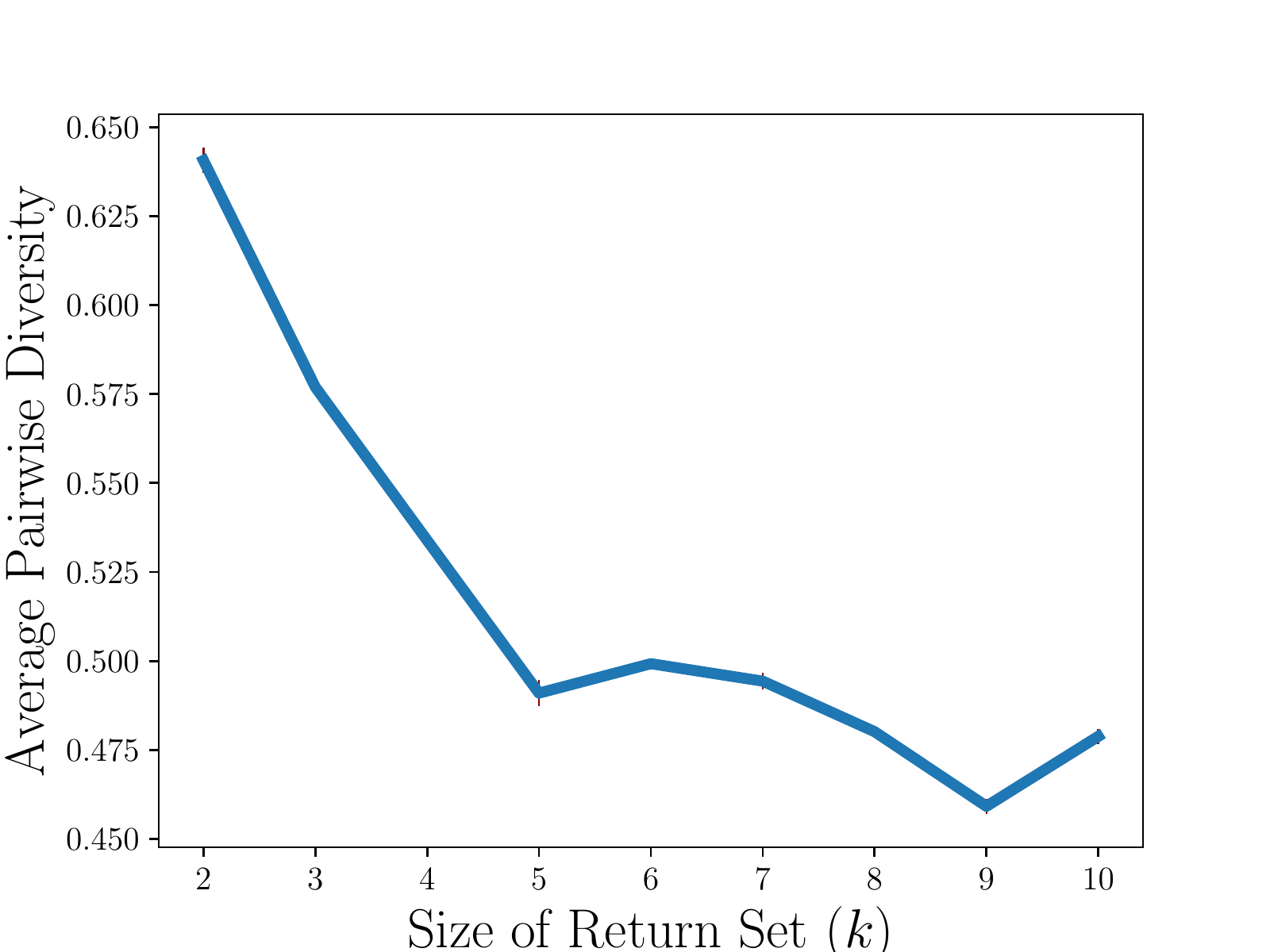} \\
      (b) \\
    \end{multicols}
    \caption{Average reward per policy and average pairwise diversity as a function of the size of the return set for ten  nine-room grid world trials. Here $\lambda = 8$ and $\alpha = .95$. (a) Mean and standard deviation of the average reward per policy. The reward from the optimal policy is displayed in gray. (b) Mean and standard deviation of the average pairwise diversity.}
    \label{NumPathResults}
\end{figure*}

\begin{figure*}[!htb]
    \centering
    \includegraphics[trim={2cm 4.5cm 1.5cm 4.5cm},clip, width = .63\textwidth]{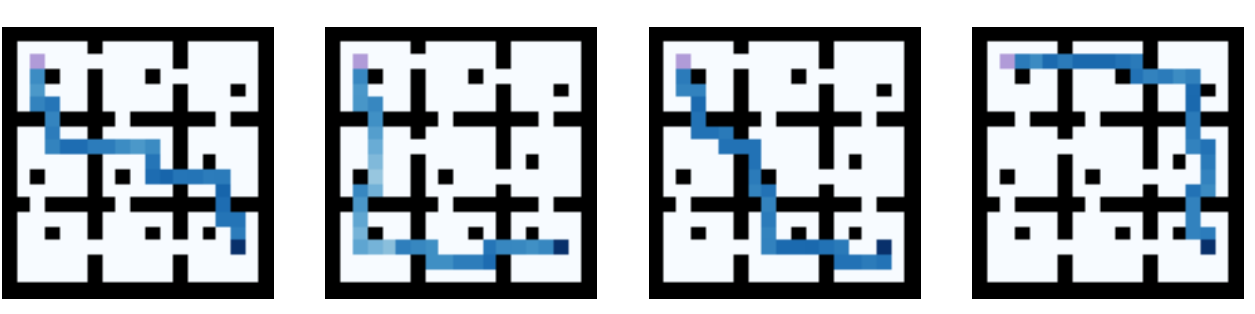} \\ 
    (a) \\
    \includegraphics[trim={2cm 5cm 1.5cm 5cm},clip, width = .95\textwidth]{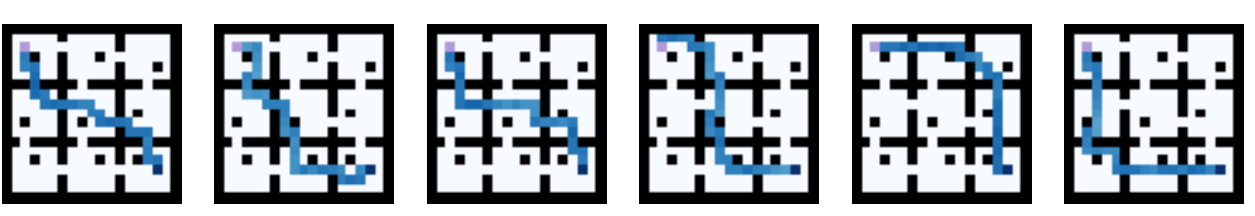} \\
    (b)
    \caption{Sample state occupancy maps from a nine-room grid-world trial with $\lambda = 8$, $\alpha = .95$, and two different return set sizes. Obstacles and walls are black, the initial state is red, the goal state is purple, and the state occupancy probability is blue. (a) Return set size is four. (b) Return set size is six. }
    \label{NumPathsOccupancy}
\end{figure*}

\subsection{Finding More Policies}
We show how varying the desired number of policies in the return set affects performance using the nine-room grid world. Here we set the value of the tradeoff parameter $\lambda = 8$ based on the results in the previous section and again set the correct transition parameter $\alpha = .95$. Figure \ref{NumPathResults} shows the average reward per policy and the average pairwise diversity of the policies found as a function of the size of the return set. As shown, there is a marked decrease in the average pairwise diversity as the size of the return set grows. This provides further evidence that the environment provides a natural limit on the number of meaningfully diverse policies that can be obtained. 

Figure \ref{NumPathsOccupancy} shows the state occupancy maps for the policies learned for a single trial and two different return set sizes. When the return set size is $k = 4$, our approach is able to find four policies that utilize distinct doors and rooms to reach the goal. When the return set size is $k = 6$, our approach finds five clearly distinct policies, including ones extremely similar to the four found with $k = 4$. However, policies 1 and 3 are very similar to each other. This behavior of increasingly diverse policies up to a threshold is consistent across the trials. We conclude that depending on the location of the obstacles and doors, our approach is able to find a number of distinct policies before reaching a limit that is related to the environment design.

\subsection{Correct Transition Parameter}

\begin{figure*}[!htb]
    \centering
    \begin{multicols}{2}
    \includegraphics[width = .45\textwidth]{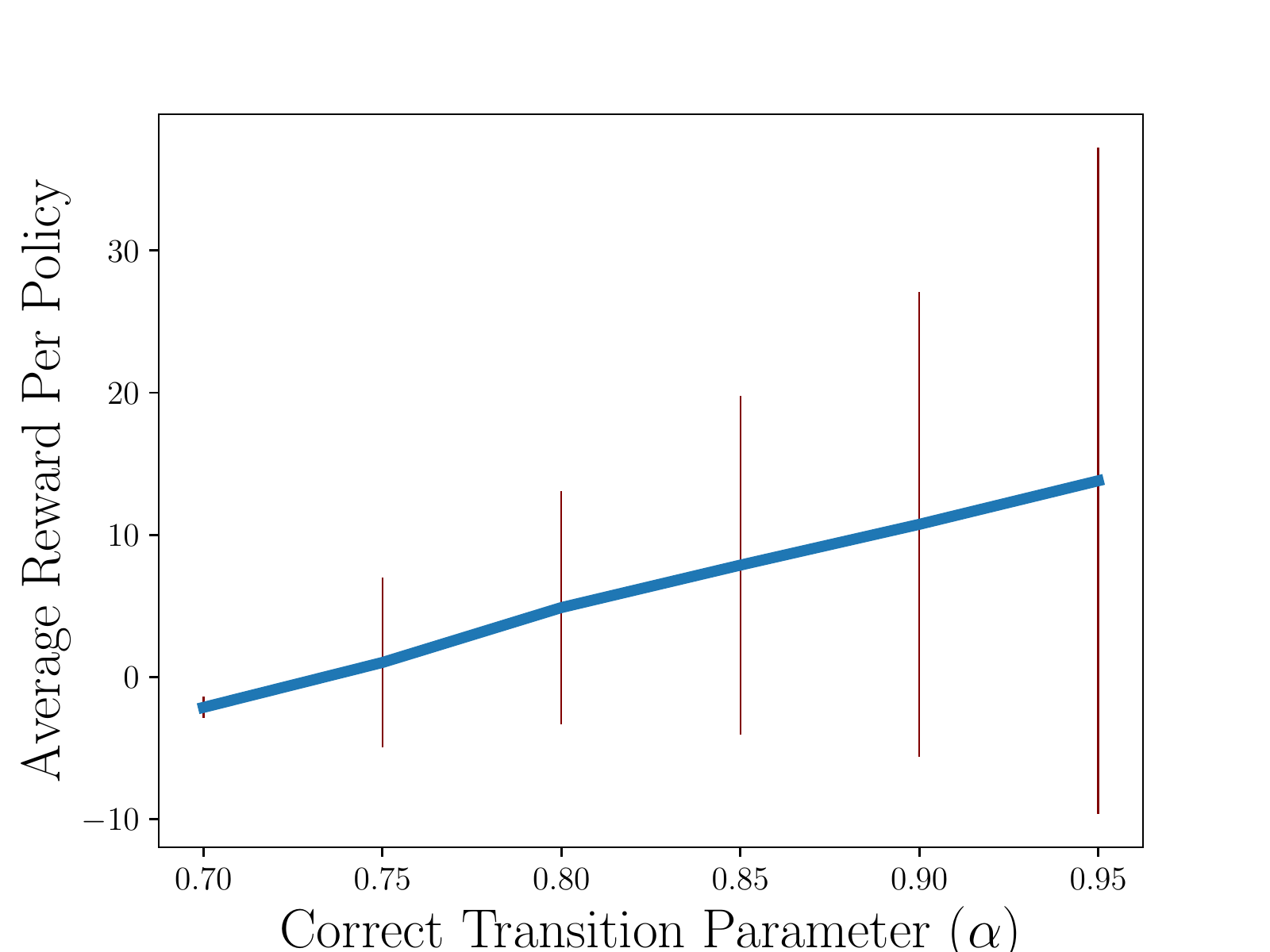} \\
      (a) \\
    \includegraphics[width = .45\textwidth]{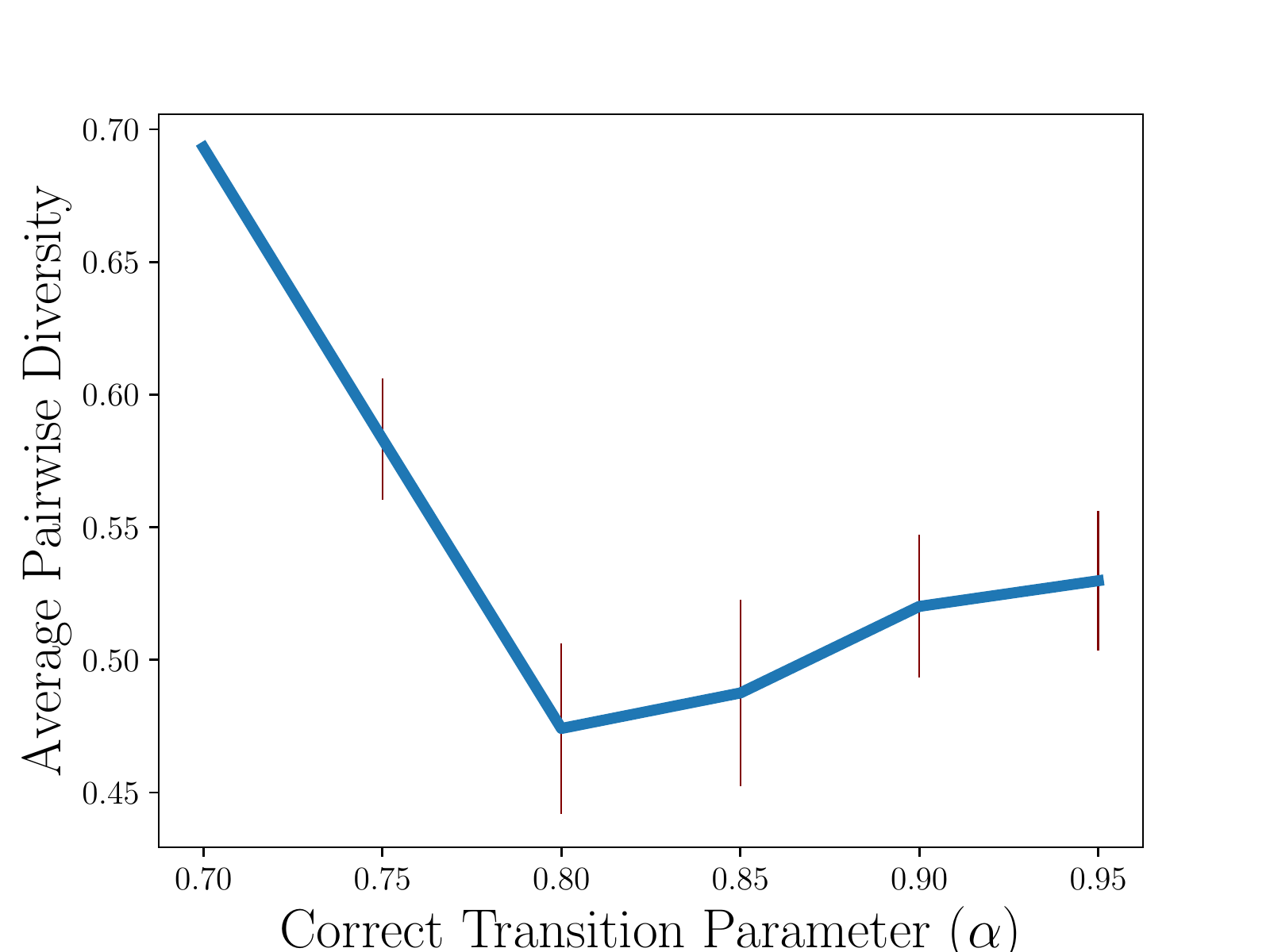} \\
      (b) \\
    \end{multicols}
    \caption{Average reward per policy and average pairwise diversity as a function of the correct transition parameter for twenty trials of the four-room grid world. Here $\lambda = 8$ and $k = 2$. (a) Mean and standard deviation of the average reward per policy..  (b) Mean and standard deviation of the average pairwise diversity.}
    \label{CorrectTrans}
\end{figure*}

Here we investigate the role of stochasticity in the performance of our approach. Figure \ref{CorrectTrans} shows the average reward per policy and average pairwise diversity as a function of the correct transition parameter, $\alpha$, in the four-room grid world. With low values of the correct transition parameter, the probability of hitting obstacles on the way to the goal is high, and the algorithm finds stationary policies with near-maximal diversity but a low reward. As the correct transition parameter increases it becomes optimal to find policies that reach the goal and the average diversity decreases before reaching a minima at $\alpha = .8$. The average diversity then begins to increase past this point as the decreased stochasticity leads to more distinct policies.

\section{Conclusion and Future Work}
In this work, we considered the problem of stochastic planning in situations where the objective function is known to be only partially specified. In this setting, we proposed generating a representative set of near-optimal policies with respect to the known objective. To that end, we formulated a nonlinear optimization problem that finds a small set of near-optimal and diverse policies. We showed that it is possible to efficiently solve the optimization problem using the Frank-Wolfe method and proved non-asymptotic convergence rates. We then compared the performance of the Frank-Wolfe method with projected gradient ascent and investigated the role of the hyperparameters using a series of navigation problems.

Our results show that the choice of the tradeoff parameter and the size of the return set play an important role in the performance of our approach. As the tradeoff parameter and the size of the return set increase, our approach is able to find increasing numbers of meaningfully distinct near-optimal policies up to a limit that is related to the structure of the environment. An interesting future extension of our approach would be investigating the utility of these near-optimal diverse strategies in generating effective collaboration between groups of autonomous agents. 

\bibliography{main.bib}

\begin{thebibliography}{26}
\providecommand{\natexlab}[1]{#1}
\providecommand{\url}[1]{\texttt{#1}}
\providecommand{\urlprefix}{URL }
\expandafter\ifx\csname urlstyle\endcsname\relax
  \providecommand{\doi}[1]{doi:\discretionary{}{}{}#1}\else
  \providecommand{\doi}{doi:\discretionary{}{}{}\begingroup
  \urlstyle{rm}\Url}\fi

\bibitem[{Bach(2013)}]{bach_2013}
Bach, F. 2013.
\newblock \emph{Learning with Submodular Functions: A Convex Optimization
  Perspective}, volume~6 of \emph{Foundations and Trends in Machine Learning}.
\newblock Now Publishers Inc.

\bibitem[{Bertsekas et~al.(1995)Bertsekas, Bertsekas, Bertsekas, and
  Bertsekas}]{bertsekas1995dynamic}
Bertsekas, D.~P.; Bertsekas, D.~P.; Bertsekas, D.~P.; and Bertsekas, D.~P.
  1995.
\newblock \emph{Dynamic programming and optimal control}, volume~1.
\newblock Athena scientific Belmont, MA.

\bibitem[{Boyd and Vandenberghe(2004)}]{boyd2004convex}
Boyd, S.; and Vandenberghe, L. 2004.
\newblock \emph{Convex optimization}.
\newblock Cambridge University Press.

\bibitem[{Bri{\"e}t and Harremo{\"e}s(2009)}]{briet2009properties}
Bri{\"e}t, J.; and Harremo{\"e}s, P. 2009.
\newblock Properties of classical and quantum Jensen-Shannon divergence.
\newblock \emph{Physical review A} 79(5): 052311.

\bibitem[{Bryce(2014)}]{bryce2014landmark}
Bryce, D. 2014.
\newblock Landmark-based plan distance measures for diverse planning.
\newblock In \emph{Proceedings of the Twenty-Fourth International Conferenc on
  International Conference on Automated Planning and Scheduling}, 56--64.

\bibitem[{Coman and Munoz-Avila(2011)}]{coman2011generating}
Coman, A.; and Munoz-Avila, H. 2011.
\newblock Generating Diverse Plans Using Quantitative and Qualitative Plan
  Distance Metrics.
\newblock In \emph{AAAI}, 946--951. Citeseer.

\bibitem[{Eysenbach et~al.(2018)Eysenbach, Gupta, Ibarz, and
  Levine}]{eysenbach2018diversity}
Eysenbach, B.; Gupta, A.; Ibarz, J.; and Levine, S. 2018.
\newblock Diversity is all you need: Learning skills without a reward function.
\newblock \emph{arXiv preprint arXiv:1802.06070} .

\bibitem[{Frank and Wolfe(1956)}]{frank1956algorithm}
Frank, M.; and Wolfe, P. 1956.
\newblock An algorithm for quadratic programming.
\newblock \emph{Naval research logistics quarterly} 3(1-2): 95--110.

\bibitem[{Gaier, Asteroth, and Mouret(2020)}]{gaier2020automating}
Gaier, A.; Asteroth, A.; and Mouret, J.-B. 2020.
\newblock Automating Representation Discovery with MAP-Elites.
\newblock \emph{arXiv preprint arXiv:2003.04389} .

\bibitem[{Hoffmann and Nebel(2001)}]{hoffmann2001ff}
Hoffmann, J.; and Nebel, B. 2001.
\newblock The FF planning system: Fast plan generation through heuristic
  search.
\newblock \emph{Journal of Artificial Intelligence Research} 14: 253--302.

\bibitem[{Horn and Johnson(2012)}]{horn2012matrix}
Horn, R.~A.; and Johnson, C.~R. 2012.
\newblock \emph{Matrix analysis}.
\newblock Cambridge University Press.

\bibitem[{Jackson and Daley(2019)}]{jackson2019novelty}
Jackson, E.~C.; and Daley, M. 2019.
\newblock Novelty search for deep reinforcement learning policy network weights
  by action sequence edit metric distance.
\newblock In \emph{Proceedings of the Genetic and Evolutionary Computation
  Conference Companion}, 173--174.

\bibitem[{Katz and Sohrabi(2020)}]{katz2020reshaping}
Katz, M.; and Sohrabi, S. 2020.
\newblock Reshaping Diverse Planning.
\newblock In \emph{AAAI}, 9892--9899.

\bibitem[{Lacoste-Julien(2016)}]{lacoste2016convergence}
Lacoste-Julien, S. 2016.
\newblock Convergence rate of Frank-Wolfe for non-convex objectives.
\newblock \emph{arXiv preprint arXiv:1607.00345} .

\bibitem[{Lan(2020)}]{lan2020first}
Lan, G. 2020.
\newblock \emph{First-order and Stochastic Optimization Methods for Machine
  Learning}.
\newblock Springer.

\bibitem[{Lehman and Stanley(2008)}]{lehman2008exploiting}
Lehman, J.; and Stanley, K.~O. 2008.
\newblock Exploiting open-endedness to solve problems through the search for
  novelty.
\newblock In \emph{ALIFE}, 329--336.

\bibitem[{Mouret and Clune(2015)}]{mouret2015illuminating}
Mouret, J.-B.; and Clune, J. 2015.
\newblock Illuminating search spaces by mapping elites.
\newblock \emph{arXiv preprint arXiv:1504.04909} .

\bibitem[{Myers(2006)}]{myers2006metatheoretic}
Myers, K.~L. 2006.
\newblock Metatheoretic Plan Summarization and Comparison.
\newblock In \emph{ICAPS}, 182--192.

\bibitem[{Myers and Lee(1999)}]{myers1999generating}
Myers, K.~L.; and Lee, T.~J. 1999.
\newblock Generating qualitatively different plans through metatheoretic
  biases.
\newblock In \emph{AAAI/IAAI}, 570--576.

\bibitem[{Nesterov(2013)}]{nesterov2013introductory}
Nesterov, Y. 2013.
\newblock \emph{Introductory lectures on convex optimization: A basic course},
  volume~87.
\newblock Springer Science \& Business Media.

\bibitem[{Nguyen et~al.(2012)Nguyen, Do, Gerevini, Serina, Srivastava, and
  Kambhampati}]{nguyen2012generating}
Nguyen, T.~A.; Do, M.; Gerevini, A.~E.; Serina, I.; Srivastava, B.; and
  Kambhampati, S. 2012.
\newblock Generating diverse plans to handle unknown and partially known user
  preferences.
\newblock \emph{Artificial Intelligence} 190: 1--31.

\bibitem[{Nocedal and Wright(2006)}]{nocedal2006numerical}
Nocedal, J.; and Wright, S. 2006.
\newblock \emph{Numerical optimization}.
\newblock Springer Science \& Business Media.

\bibitem[{Parker-Holder et~al.(2020)Parker-Holder, Pacchiano, Choromanski, and
  Roberts}]{parker2020effective}
Parker-Holder, J.; Pacchiano, A.; Choromanski, K.; and Roberts, S. 2020.
\newblock Effective Diversity in Population-Based Reinforcement Learning.
\newblock \emph{arXiv preprint arXiv:2002.00632} .

\bibitem[{Petit and Trapp(2015)}]{petit2015finding}
Petit, T.; and Trapp, A.~C. 2015.
\newblock Finding diverse solutions of high quality to constraint optimization
  problems.
\newblock In \emph{International Joint Conference on Artificial Intelligence}.

\bibitem[{Pugh, Soros, and Stanley(2016)}]{pugh2016quality}
Pugh, J.~K.; Soros, L.~B.; and Stanley, K.~O. 2016.
\newblock Quality diversity: A new frontier for evolutionary computation.
\newblock \emph{Frontiers in Robotics and AI} 3: 40.

\bibitem[{Srivastava et~al.(2007)Srivastava, Nguyen, Gerevini, Kambhampati, Do,
  and Serina}]{srivastava2007domain}
Srivastava, B.; Nguyen, T.~A.; Gerevini, A.; Kambhampati, S.; Do, M.~B.; and
  Serina, I. 2007.
\newblock Domain Independent Approaches for Finding Diverse Plans.
\newblock In \emph{IJCAI}, 2016--2022.

\end{thebibliography}

\end{document}